\newtheorem{theorem}{Theorem}
\def\name{SHAP@k}
\DeclareMathOperator*{\argmax}{arg\,max}
\DeclareMathOperator*{\argmin}{arg\,min}
\title{\name: Efficient and Probably Approximately Correct (PAC) Identification of Top-k Features}
\author{%
  Sanjay Kariyappa\\
  J.P. Morgan AI Research\\
  Palo Alto, CA, USA \\
  \texttt{sanjay.kariyappa@jpmchase.com} \\
  \And
  Leonidas Tsepenekas\\
  J.P. Morgan AI Research\\
  New York City, NY, USA \\
  \texttt{leonidas.tsepenekas@jpmchase.com} \\
  \And
  Freddy Lécué\\
  J.P. Morgan AI Research\\
  New York City, NY, USA \\
  \texttt{freddy.lecue@jpmchase.com} \\
  \And
  Daniele Magazzeni\\
  J.P. Morgan AI Research\\
  London, UK \\
  \texttt{daniele.magazzeni@jpmorgan.com} \\
}
\newcommand{\ignore}[1]{}
\begin{document}

\maketitle

\begin{abstract}
The SHAP framework provides a principled method to explain the predictions of a model by computing feature importance. Motivated by applications in finance, we introduce the \emph{Top-k Identification Problem (TkIP)}, where the objective is to identify the $k$ features with the highest SHAP values. While any method to compute SHAP values with uncertainty estimates (such as KernelSHAP and SamplingSHAP) can be trivially adapted to solve TkIP, doing so is highly sample inefficient. The goal of our work is to improve the sample efficiency of existing methods in the context of solving TkIP. Our key insight is that  TkIP can be framed as an \emph{Explore-m problem}~\cite{explorem}--a well-studied problem related to multi-armed bandits (MAB). This connection enables us to improve sample efficiency by leveraging two techniques from the MAB literature: (1) a better stopping-condition (to stop sampling) that identifies when PAC (\emph{Probably Approximately Correct}) guarantees have been met and (2) a greedy sampling scheme that judiciously allocates samples between different features. By adopting these methods we develop \emph{KernelSHAP@k} and \emph{SamplingSHAP@k} to efficiently solve TkIP, offering an average improvement of $5\times$ in sample-efficiency and runtime across most common credit related datasets.
\end{abstract}
\section{Introduction}
The ability to explain the predictions of ML models is of critical importance in highly regulated industries, where laws provide \emph{right to explanations} for people who are adversely impacted by algorithmic decision making. 
Specifically in finance, regulations like Fair Credit Reporting Act~\cite{fcra} and Equal Credit Opportunity Act~\cite{ecoa} require a rejected loan/credit application (i.e. adverse action) to be explained to the borrower, by providing reasons for why the application was rejected (e.g., low credit score, high debit-to-income ratio, recent delinquencies, etc.).  
Owing to its principled formulation, the SHAP framework~\cite{shap} is the defacto choice for explaining model predictions in credit-risk assesment models~\cite{shap_finance}. While exact computation of SHAP values is computationally intractable,
sampling-based techniques like KernelSHAP~\cite{shap} and SamplingSHAP provide a practical alternative to compute approximate SHAP values. Additionally, recent works have developed methods to quantify the approximation error of such sampling-based techniques, by providing confidence intervals (CIs) for the estimated SHAP values~\cite{covert2020improving}.

In this paper, we introduce the \emph{Top-k Identification Problem (TkIP)}, where the objective is to identify the $k$ most important features, i.e., those with the $k$ highest SHAP values (referred to as the \emph{Top-k features}). TkIP is motivated by an important real-world use-case of processing credit/loan applications, where the lender is  required to provide the top features that contributed negatively to the model's prediction (i.e. explanations) in the event of a rejection; this is standard practice by credit/loan issuers in order to comply with the Equal Credit Opportunity Act~\cite{ecoa}. Existing methods like KernelSHAP and SamplingSHAP can be straightforwardly adapted to identify Top-k features with PAC guarantees, by evaluating enough samples to sufficiently reduce the the CIs of the SHAP estimates. However, doing so can be computationally expensive as it often requires a very large number of samples.

Motivated by this problem, our paper investigates methods to improve the sample efficiency of KernelSHAP and SamplingSHAP, specifically to solve TkIP.
Our key insight is that TkIP can be framed as an \emph{Explore-m problem}~\cite{explorem} -- a well-studied problem related to multi-arm bandits (MAB), where the goal is to identify a subset of arms with the highest expected payoffs. By leveraging this connection, we make the following key changes to the SHAP estimation algorithms based on ideas that have been developed in the MAB literature:
\begin{enumerate}[itemsep=1pt, leftmargin=*, topsep=0pt, label=(C\arabic*)]
\item\textbf{Overlap-based stopping condition~\cite{lucb}:} Sampling for KernelSHAP and SamplingSHAP is usually done until the CI widths of SHAP values associated with \emph{all the features} falls below a threshold. This naive stopping condition is unnecessarily conservative for solving TkIP; so instead, we use a stopping condition that is based on the overlap in CIs between different features (instead of the absolute CI width of each feature). This allows for early-stopping once a PAC solution for TkIP has been identified.
\item\textbf{Greedy sampling scheme~\cite{lucb}:} For SamplingSHAP, the default sampling scheme of allocating samples according to the variance of each feature is ill-suited for solving TkIP. Instead, we leverage a greedy sampling scheme that is designed to efficiently solve the \emph{Explore-m problem} by allocating a higher number of samples to features that are likely to change the Top-k subset. This enables a significant reduction in sample-costs compared to the variance-based sample allocation. Note that (C2) requires the ability to allocate samples to evaluate the SHAP values on a per-feature basis, so it cannot be applied to KernelSHAP.
\end{enumerate}

We use the above techniques to develop \emph{KernelSHAP@k} (\emph{KernelSHAP + C1}) and \emph{SamplingSHAP@k} (\emph{SamplingSHAP + C1 + C2}). We evaluate these methods with the most common credit related datasets and show that they offer significant improvements in sample efficiency and runtime, compared to their respective baselines. 
The rest of this paper is structured as follows:
\begin{itemize}[itemsep=1pt, topsep=0pt, leftmargin=20pt]
    \item In Section \ref{sec:related}, we provide background on sampling-based methods that can be used to estimate SHAP values and related work on variance reduction and uncertainty estimation.
    \item In Section \ref{sec:problem}, we formally define the \emph{Top-k Identification problem} and develop a \emph{naive stopping condition} that can be used with Kernel/Sampling SHAP to correctly identify Top-k features. Nonetheless, this condition is sample-inefficient.
    \item In Section \ref{sec:approach}, we develop \emph{KernelSHAP@k} and \emph{SamplingSHAP@k} to efficiently solve TkIP with PAC guarantees. The key insight here is framing TkIP as an \emph{Explore-m problem}.
    \item In Section \ref{sec:experiments}, we evaluate Kernel/Sampling-SHAP@k on a suite of credit related datasets and demonstrate significant improvements in sample-costs and runtime.
    \item We discuss limitations and future directions in Section~\ref{sec:limitations}, and conclude in Section \ref{sec:conclusion}.
\end{itemize}

\section{Background and related work}\label{sec:related}
The goal of our work is to modify existing algorithms to efficiently identify Top-k features with PAC guarantees. In this section, we provide background on the SHAP framework and discuss existing sampling-based techniques (SamplingSHAP and KernelSHAP) that estimate SHAP values. Additionally, we discuss related works that extend these method by reducing the variance of the estimates and quantify uncertainty in the form of confidence intervals. 

\subsection{SHAP}
SHAP (SHapley Additive exPlanations) is based on a game-theoretic concept called Shapley values~\cite{shapley}, which is a method to fairly distribute the payoffs of a cooperative game among the players. This is done by measuring the average marginal contribution of a single player computed across all possible coalitions of players. Such a formulation of assigning credit has been shown to uniquely satisfy a set of fairness axioms such as local accuracy, missingness and consistency~\cite{young1985monotonic}. SHAP applies this concept to explaining the predictions of the model by treating individual features as players and the output of the model as the payoff. By measuring the marginal contributions of features across different coalitions, SHAP assigns a score to each feature that reflects its contribution to the final prediction of the model. Given a set of features $D =\{1,2,..,d\}$, the SHAP value $\phi_i$ for the $i^{th}$ feature of an input $x$ with a model $f$ is computed by taking the weighted average of the change in predictions of $f$ when feature $i$ is added to a subset of features $S$ as shown in Eqn.\ref{eq:shap}.
\begin{equation}\label{eq:shap}
\phi_i(x,f) = \sum_{S \subseteq D \setminus \{i\}} \frac{|S|!(d-|S|-1)!}{d!} \left[f(x_{S \cup \{i\}}) - f(x_S)\right]
\end{equation}
Here $x_S$ is the feature vector restricted to $S$. To evaluate the model function with missing features in the above expression, we use the interventional SHAP formulation~\cite{chen20}, where missing feature values are set to a default baseline. Note that computing SHAP values exactly has a computational complexity of $\Theta(2^d)$. While there are efficient methods to compute exact SHAP values for specific models such as decision trees~\cite{treeshap1, treeshap2}, in general, the exponential complexity makes it computationally intractable to evaluate exact SHAP values when the number of features is large. To reduce computational costs, sampling-based approximation techniques have been proposed. We explain two such methods in the remainder of this section.

\subsection{SamplingSHAP}\label{sec:SS}
SamplingSHAP estimates SHAP values by only evaluating a subset of terms in Eqn.\ref{eq:shap} and then averaging over the resulting marginals. Štrumbelj et al.~\cite{sampling_shap} provide an efficient algorithm to perform Monte Carlo sampling according to the probability distribution induced by the weights in Eqn.\ref{eq:shap}. To quantify the uncertainty in the SHAP estimate based on the number of samples, Merrick et al.~\cite{explanation_game} proposed the use of Standard Error of Means (SEM) to derive confidence intervals through the \emph{Central Limit Theorem (CLT)}. Specifically, the Monte Carlo simulation is run $T_i$ times for each feature $i$, thus giving a set of SHAP estimates $\{\hat{\phi}_i^j\}_{j=1}^{T_i}$. Finally, the SHAP value for $i$ is set to be $\hat{\phi}_i = \sum^{T_i}_{j = 1}\hat{\phi}_i^j / T_i$. Eqn.\ref{eq:sampling_ci} shows how the $95\%$ CI for the $i^{th}$ feature (there's a $0.95$ probability of $\phi_i$ being in $CI_i$):
\begin{equation}\label{eq:sampling_ci}
CI_i = \left[\hat{\phi}_i \pm 1.96\frac{\sigma_i}{\sqrt{T_i}}\right].
\end{equation}
Here  $\sigma_i$ denotes the standard deviation of the set of SHAP estimates $\{\hat{\phi}_i^j\}_{j=1}^{T_i}$. Note that we can achieve any confidence that we want, by tweaking the parameter $1.96$ accordingly. 

Additionally, prior works have also tried to reduce the length of the CIs through variance reduction techniques. For instance, Mitchell et al.~\cite{sampling_paired} propose to evaluate negatively correlated pairs of samples in SamplingSHAP to reduce the variance $\sigma_i$ of SHAP estimates. Sampling techniques have also been used in the context of Game Theory for computing Shapley values~\cite{maleki2014bounding, ijcai2021p11, 10.1016/j.cor.2008.04.004}.

\subsection{KernelSHAP}\label{sec:KS}

KernelSHAP~\cite{shap} is another sampling-based method that views SHAP values as the solution to a weighted regression problem. Specifically, consider a linear model of the form $g(S) = \phi_0 + \sum\limits_{i\in S}\phi_i$, where $\phi_i$ denote the SHAP values. KernelSHAP proposes to estimate these values by solving the following optimization problem:
\begin{equation}\label{eq:kernel_shap}
\{\phi_i\} = \argmin\limits_{\phi_1,..\phi_d} \sum\limits_{S\subseteq D}w(S)\Big(f(S)-g(S)\Big)^2. 
\end{equation}
Here, $w(S)$ is a weighting function that is chosen in a way that makes solving Eqn.\ref{eq:kernel_shap} equivalent to finding SHAP values. Note that evaluating Eqn.\ref{eq:kernel_shap} requires evaluating an exponential number of terms in the summation, making the computation of exact SHAP values intractable. Fortunately, an approximation of Eqn.\ref{eq:kernel_shap} that evaluates only a small subset of terms is sufficient in practice to estimate SHAP values. Furthermore, a recent work~\cite{covert2020improving} has shown that the variance of SHAP values, computed by using KernelSHAP, can be used to derive confidence intervals, providing a means of detecting convergence in the SHAP estimates; this leads to CIs identical to those of Eqn.\ref{eq:sampling_ci}. Additionally, this work also uses paired-sampling (similar to ~\cite{sampling_paired}) with KernelSHAP to reduce computational costs, by reducing the variance of the SHAP estimates.

\section{Problem setting}\label{sec:problem}
In this section, we formally define the Top-k identification problem (TkIP), the goal of which is to identify the features with the highest SHAP values. To apply sampling based techniques to solve TkIP, we define an $(\epsilon, \delta)$-PAC solution for it, which allows for an $\epsilon$-approximate version of the solution with a low probability of failure ($\delta$). Finally, we describe a naive stopping condition that can be used with Kernel/Sampling-SHAP to derive a $(\epsilon, \delta)$-PAC solution. We demonstrate that this naive solution is sample-inefficient, motivating the need for our proposed solutions that improve sample-efficiency.

\subsection{Top-k identification problem}
Consider a model $f:\mathcal{I}\rightarrow\mathbb{R}$, which acts on a $d$-dimensional input $x\in\mathcal{I}$ to produce a prediction $p=f(x)$. For an input $x\in \mathcal{I}$, let $\{\phi_1, \phi_2, ..., \phi_d\}$ denote the set of SHAP values corresponding to the input features $D=\{1, 2,.., d\}$ respectively. To simplify notation, let us assume that the features are indexed such that:
\begin{equation}
    \phi_1 \geq \phi_2 \geq \phi_3.. \geq \phi_d.
\end{equation}
The goal of TkIP is to identify the $k$ features: \textsc{Topk}$ = \{1,2,..,k\}$ corresponding to the $k$ highest SHAP values $\mathcal{S}=\{\phi_1, \phi_2, .. \phi_k\}$. Note that the ordering of features in \textsc{Topk} does not matter. Solving TkIP exactly requires us to precisely evaluate all the SHAP values, which is computationally intractable. Instead, we define \emph{$\epsilon-$approximate} and \emph{($\epsilon, \delta$)-PAC} solutions for TkIP that are more useful in the context of sampling-based PAC methods.
\begin{itemize}[itemsep=1pt, topsep=0pt, leftmargin=10pt]
    \item $\boldsymbol{\epsilon}$-\textbf{approximate solution:} For a given accuracy parameter $\epsilon \in (0,1)$, consider a subset of features $D^*\subset D$ such that $|D^*| = k$. $D^*$ is an $\epsilon$-approximate solution to TkIP if it satisfies the following: 
    \begin{equation}
    \phi_i \geq \phi_k-\epsilon, \forall i \in D^*.   
    \end{equation}
    \item $\boldsymbol{(\epsilon, \delta)}$-\textbf{PAC solution:} For given accuracy and confidence parameters $\epsilon, \delta \in (0,1)$, $D^*$ is said to be an $(\epsilon, \delta)$ solution for TkIP if it is an $\epsilon$-approximate solution with a probability at least $1-\delta$:
    \begin{equation}
    \Pr[\phi_i \geq \phi_k-\epsilon, \forall i \in D^*] \geq 1-\delta.
    \end{equation}
    In other words, here we allow for randomized algorithms that should compute $D^*$ with controllable (low) probability of failure.
\end{itemize}
This relaxed notion of the solution allows for a feature $i$ to be returned as part of the solution even if $i \notin$ \textsc{Topk}, as long as the corresponding SHAP value $\phi_i$ is $\epsilon$-close to $\phi_k$ (i.e. the $k^{th}$ SHAP value).

\subsection{PAC solution for TkIP with naive stopping condition}
In both KernelSHAP and SamplingSHAP, we can use the CLT-based approaches mentioned in Sections~\ref{sec:SS}, \ref{sec:KS} to obtain confidence intervals of the following form. Let $\phi_i$ be the true SHAP value for feature $i$, and let $\hat{\phi}_i$ be our approximation for it. Then, if we repeat the corresponding algorithm $T_i$ times, with probability at least $1-\frac{\delta}{d}$ we have:
\begin{align}
    |CI_i| = 2 \cdot Z(\delta / d) \frac{\sigma_i}{\sqrt{T_i}}\label{CI}
\end{align}
In the above, $Z(\delta / d)$ is the critical value from the standard normal distribution for the desired level of confidence; note that this value is a small constant. It is clear from Eqn.~\ref{CI}, that the larger $T_i$ is, the closer our approximation is to the true value. One way to identify the \textsc{Topk} features is by running the SHAP estimation algorithm (i.e. adding more samples) until the CIs for all the features are small enough to meet the following stopping condition: 
\begin{align}\label{eq:naive_stopping_condition}
|CI_i| = 2 \cdot Z(\delta / d) \frac{\sigma_i}{\sqrt{T_i}} \leq \epsilon, \forall i\in D. 
\end{align}
We call this the \emph{naive stopping condition}, and in Theorem~\ref{thm:naive} we show that it indeed leads to an $(\epsilon, \delta)$-PAC solution for TkIP. Thus, Kernel/Sampling-SHAP can be straightforwardly adapted to solve TkIP by using enough samples to meet this stopping condition. In the following subsection, we will explain why this naive approach is sample-inefficient with the aid of an example, motivating the need for a better stopping condition and sampling technique.
\begin{figure*}[t]
  \centering
  \includegraphics[width=\textwidth]{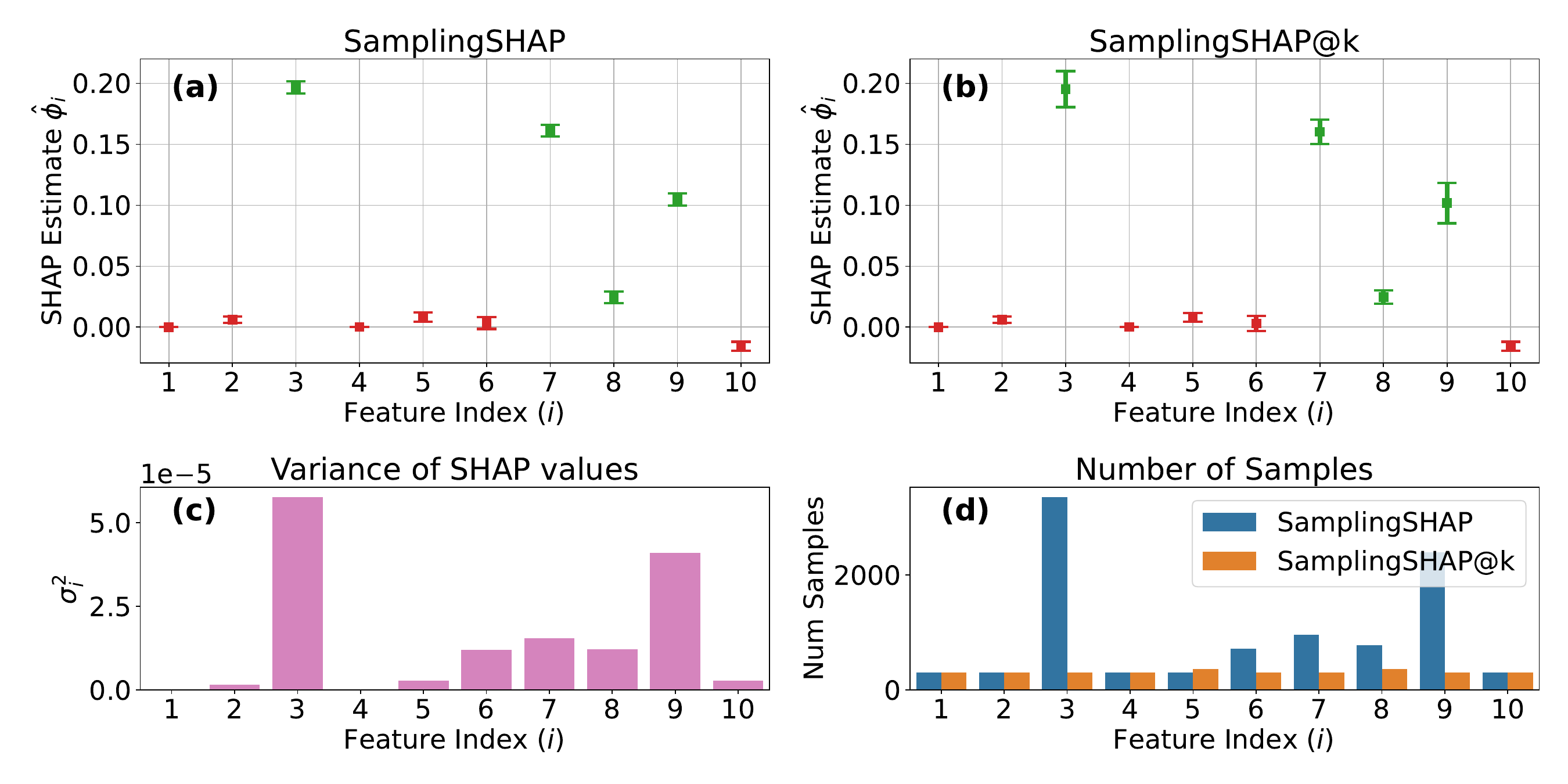}
  \vspace{-0.2in}
  \caption{Comparing SamplingSHAP and SamplingSHAP@k for solving TkIP. (a) CIs of SHAP estimates with SamplingSHAP (b) CIs of SHAP estimates with SamplingSHAP@k (c) Variance of SHAP estimates (d) Costs of SamplingSHAP and SamplingSHAP@k. The cost of SamplingSHAP scales with the variance of the SHAP estimates, resulting in high cost. SamplingSHAP improves sample efficiency by using an improved stopping condition and sampling scheme.}
  \label{fig:naive_vs_lucb}
\end{figure*}

\begin{theorem}\label{thm:naive}
Let $\mathcal{S}=\{\hat{\phi_1}, \hat{\phi_2},.., \hat{\phi}_d\}$ denote the SHAP estimates of input features $D=\{1,2,..,d\}$, such that the $CI_i$s defined using a confidence of $\frac{\delta}{d}$ satisfy $|CI_i| \leq \epsilon, \forall i\in D$. Then, $D^*=\argmax_k(\mathcal{S})$ is an $(\epsilon, \delta)$-PAC solution for TkIP; the solution consists of the $k$ features with the largest $\hat{\phi}_i$.
\end{theorem}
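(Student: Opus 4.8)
The plan is to separate the argument into a probabilistic step and a deterministic combinatorial step. First I would pin down a high-probability ``good event'' on which every true SHAP value is close to its estimate, and then show that on that event the greedy selection $D^*$ is automatically $\epsilon$-approximate by a short exchange/pigeonhole argument.

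For the probabilistic step I would apply a union bound over the $d$ confidence intervals. Each $CI_i$ is built at confidence level $1-\delta/d$, so $\Pr[\phi_i \notin CI_i] \le \delta/d$; summing over $i \in D$ gives that the event $\mathcal{E} := \{\phi_i \in CI_i \text{ for all } i \in D\}$ holds with probability at least $1-\delta$. Since $\hat\phi_i$ is the midpoint of $CI_i$ and $|CI_i| \le \epsilon$ by hypothesis, on $\mathcal{E}$ we have $|\phi_i - \hat\phi_i| \le \epsilon/2$ for every feature $i$.

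For the deterministic step, condition on $\mathcal{E}$ and fix any $i \in D^*$. If $i \in \textsc{Topk}$ then $\phi_i \ge \phi_k$ and there is nothing to prove, so assume $i \notin \textsc{Topk}$. Because $|D^*| = k = |\textsc{Topk}|$ and $i \in D^* \setminus \textsc{Topk}$, pigeonhole yields an index $j \in \textsc{Topk} \setminus D^*$. Since $D^*$ is exactly the set of $k$ features with the largest estimates, $i \in D^*$, and $j \notin D^*$, we get $\hat\phi_i \ge \hat\phi_j$ (with ties broken consistently by $\argmax_k$). Chaining the estimation error bounds then gives $\phi_i \ge \hat\phi_i - \epsilon/2 \ge \hat\phi_j - \epsilon/2 \ge \phi_j - \epsilon \ge \phi_k - \epsilon$, where the final inequality uses $j \le k$ together with $\phi_1 \ge \cdots \ge \phi_d$. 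Hence on $\mathcal{E}$ we have $\phi_i \ge \phi_k - \epsilon$ for all $i \in D^*$, i.e.\ $D^*$ is an $\epsilon$-approximate solution.

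To finish, I would combine the two steps: $\mathcal{E}$ occurs with probability at least $1-\delta$, and whenever $\mathcal{E}$ occurs $D^*$ is $\epsilon$-approximate, so $\Pr[\phi_i \ge \phi_k - \epsilon \ \ \forall i \in D^*] \ge 1-\delta$, which is precisely the definition of an $(\epsilon,\delta)$-PAC solution. The probabilistic part is a routine union bound, so the only mild obstacle is making the exchange step airtight --- in particular ensuring the tie-breaking in $\argmax_k$ is consistent, and justifying that $|D^*| = k$ forces the existence of a ``missing'' top-$k$ index $j$ whenever $D^* \neq \textsc{Topk}$.
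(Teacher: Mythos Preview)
Your proof is correct and follows essentially the same approach as the paper: a union bound over the $d$ per-feature confidence intervals to get the good event, followed by the observation that $|\phi_i-\hat\phi_i|\le\epsilon/2$ on that event and a comparison between the selected features and the true \textsc{Topk}. The only cosmetic difference is that the paper phrases the deterministic step as a contradiction (a non-$\epsilon$-approximate $\tilde{i}$ would have $\hat\phi_{\tilde{i}}<\phi_k-\epsilon/2$ while every $i\in\textsc{Topk}$ has $\hat\phi_i\ge\phi_k-\epsilon/2$), whereas you give the equivalent direct pigeonhole/exchange chain.
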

\begin{proof}
We show that when $\phi_i \in CI_i$ for every $i$, the solution is $\epsilon$-approximate. Using a union bound over all features we have:
\begin{align*}
    \Pr[\phi_i \in CI_i, \forall i] = 1 - \Pr[\exists i: \phi_i \notin CI_i] \geq 1 - \sum^d_{i = 1}\frac{\delta}{d} = 1 - \delta
\end{align*}
For the inequality above we used the definition of $CI_i$, which states that $\Pr[\phi_i \notin CI_i] \leq \delta / d$.

Clearly, if we prove that $\phi_i \in CI_i, \forall i$ implies an $\epsilon$-approximate solution we are done. Therefore, for the sake of contradiction, assume that the resulting solution is not $\epsilon$-approximate. This means that there exists feature $\tilde{i}$ with $\phi_{\tilde{i}} < \phi_k - \epsilon$, which still made it in our top-$k$ solution. By definition of \textsc{Topk} and $CI_i$, we have that for all $i \in \textsc{Topk}$ $\hat{\phi}_i \geq \phi_k - \frac{\epsilon}{2}$. By definition of $CI_{\tilde{i}}$, we have $\hat{\phi}_{\tilde{i}} \leq \phi_{\tilde{i}} + \frac{\epsilon}{2}$. Combining this with $\phi_{\tilde{i}} < \phi_k - \epsilon$, gives $\hat{\phi}_{\tilde{i}} < \phi_k - \frac{\epsilon}{2}$. Hence, $\tilde{i}$ could never be chosen instead of any $i \in \textsc{Topk}$ in the returned solution.
\end{proof}

\subsection{Understanding the inefficiencies of the naive stopping condition}

The Naive stopping condition requires the CIs of all the features to be of width at most $\epsilon$. For a feature $i$, the number of samples $N_i$ necessary to achieve this is proportional to the variance of the feature's SHAP estimate ($N_i\propto \sigma^2_i$), resulting in high-variance features incurring a higher sample-cost. To illustrate, we apply SamplingSHAP to explain the prediction of an MLP model on a single example from the UCI Credit dataset. To identify the Top-k features (with $k=4$), we obtain CIs by runing SamplingSHAP multiple times for each feature, until the stopping condition in Eqn.~\ref{eq:naive_stopping_condition} is met. We visualize the CIs of the SHAP estimates of the individual features in Fig.\ref{fig:naive_vs_lucb}a, where the Top-4 features are marked as green. To understand the cost of this stopping condition, we plot the number of function evaluations consumed by the algorithm in Fig.\ref{fig:naive_vs_lucb}d and the variance of the SHAP estimate for each feature in Fig.\ref{fig:naive_vs_lucb}c. As expected, we find that the cost is proportional to the variance of the per-feature SHAP estimate, resulting in a high sample-cost for high-variance features.

A key drawback of the naive sampling scheme is that it requires $|CI_i| \leq \epsilon$ for all features, regardless of the uncertainty that the feature belongs in \textsc{Topk}. This results in a lot of wasted samples. For instance, in the example in Fig.\ref{fig:naive_vs_lucb}, $\hat{\phi}_3$ (SHAP estimate for $feature\text{-}3$) is much higher compared to the other features, allowing us to conclude with high confidence that $3\in \textsc{Topk}$ early on in the sampling process and avoid sampling $feature\text{-}3$ further. However the naive sampling scheme lacks such adaptivity and forces this high-variance feature to continue sampling until $|CI_3| \leq \epsilon$, thus leading to a lot of wasted samples and contributing significantly to the sample cost of SamplingSHAP. In the next section we develop $SamplingSHAP@k$ and $KernelSHAP@k$ to avoid such wasted samples by using a modified stopping condition and sampling scheme.

\section{SHAP@k: Framing TkIP as an Explore-m problem}\label{sec:approach}
The key insight of our work is that TkIP can be framed as an \emph{Explore-m} problem--a well-studied problem in multi-armed bandits (MAB), where the goal is to identify the arms with the highest expected payoffs in a sample-efficient way \cite{lucb, MAB-2, MAB-3, MAB-4}. Formally, given N arms, each with some unknown distribution of payoffs, the objective is to identify (with PAC guarantees) the subset of $m$ arms with the highest expected payoff. Note that TkIP has a 1-1 correspondence with the Explore-m problem. The arms in MAB are equivalent to the features in the context of SHAP, and the reward obtained by pulling an arm is equivalent to the SHAP estimate of a specific feature obtained through a single sample. The goal is to identify the subset of $m\ arms$/$k\ features$ with the highest expected rewards/SHAP values. This connection allows us to leverage methods from the MAB literature to efficiently solve TkIP. Hence, we propose changes to the earlier sampling scheme and stopping condition, to develop sample efficient variants of Kernel and Sampling SHAP.

\subsection{Overlap-based stopping condition (C1)}
Inspired by Kalyanakrishnan et al.~\cite{lucb}, we use the stopping condition in Theorem~\ref{thm:improved_stopping_condition} that considers the overlap in CIs between the SHAP estimates of different features. By only considering the overlap between the CIs, the improved stopping condition avoids the need to reduce all the CIs widths to below $\epsilon$ as shown in Fig.~\ref{fig:naive_vs_lucb}b. Through experimental evaluations, we show that compared to the naive-stopping condition, this results in a significant reduction in the number of samples necessary to identify the \textsc{Topk} features (Fig.~\ref{fig:naive_vs_lucb}d).

We now introduce some notation. Let $T_i$ the number of SHAP estimates that we have collected so far for feature $i$. For the desired confidence $\delta$, we define a $\delta / d$ confidence interval $CI_i = [\alpha_i, \beta_i]$ as before, where $\hat{\phi}_i$ the current SHAP estimate, $\alpha_i = \hat{\phi}_i - Z(\delta / d)\frac{\sigma_i}{\sqrt{T_i}}$ and $\beta_i = \hat{\phi}_i + Z(\delta / d)\frac{\sigma_i}{\sqrt{T_i}}$. 
 
\begin{theorem}\label{thm:improved_stopping_condition}
Let $High$ denote the set of $k$ features with the highest SHAP estimates $\hat{\phi_i}$ and $Low$ denote the remaining set of $d-k$ features. Let $h$ be the feature in $High$ with the lowest lower confidence bound i.e. $h = \argmin\limits_{i\in High}\{\alpha_i\}$, and let $\ell$ be the feature in $Low$ with the highest higher confidence bound i.e. $\ell=\argmax\limits_{i\in Low}\{\beta_i\}$. Then, $High$ is a $(\epsilon, \delta)$-PAC solution for TkIP if the following condition is satisfied:
$
    \beta_\ell - \alpha_h \leq \epsilon.
$
\end{theorem}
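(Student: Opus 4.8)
The plan is to follow the same two-stage template as the proof of Theorem~\ref{thm:naive}: first condition on the ``good event'' that every true SHAP value lands inside its confidence interval, and then argue deterministically that, on this event, the stopping condition $\beta_\ell - \alpha_h \le \epsilon$ forces $High$ to be an $\epsilon$-approximate solution. For the first stage I would reuse the union bound verbatim: since each $CI_i$ is a $\delta/d$-confidence interval, $\Pr[\exists i : \phi_i \notin CI_i] \le \sum_{i=1}^d \delta/d = \delta$, so with probability at least $1-\delta$ we have $\phi_i \in [\alpha_i,\beta_i]$ for all $i \in D$. It then suffices to show that this event, together with $\beta_\ell - \alpha_h \le \epsilon$, implies $\phi_i \ge \phi_k - \epsilon$ for every $i \in High$.

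For the deterministic core I would argue by contradiction. Suppose some $\tilde i \in High$ has $\phi_{\tilde i} < \phi_k - \epsilon$. Since $\phi_{\tilde i} < \phi_k$, feature $\tilde i$ cannot lie in $\textsc{Topk} = \{1,\dots,k\}$, so $\tilde i \in High \setminus \textsc{Topk}$; because $|High| = |\textsc{Topk}| = k$, a pigeonhole argument produces a feature $j \in \textsc{Topk} \setminus High$, i.e. $j \in \textsc{Topk} \cap Low$. Now I would chain the interval bounds: on the good event $\beta_\ell \ge \beta_j \ge \phi_j \ge \phi_k$ — the first inequality because $\ell$ maximizes $\beta$ over $Low$ and $j \in Low$, the last because $j \in \textsc{Topk}$ — while $\alpha_h \le \alpha_{\tilde i} \le \phi_{\tilde i} < \phi_k - \epsilon$, the first inequality holding because $h$ minimizes $\alpha$ over $High$ and $\tilde i \in High$. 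Subtracting gives $\beta_\ell - \alpha_h > \epsilon$, contradicting the stopping condition, so no such $\tilde i$ exists.

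The only genuinely delicate step — the ``hard part'' — is the pigeonhole observation that some true top-$k$ feature must have been displaced into $Low$, which is what lets me activate the extremal definition of $\ell$; the rest is a routine chaining of the $\alpha_i \le \phi_i \le \beta_i$ bounds with the definitions of $h$ and $\ell$. I would also note the degenerate case $High = \textsc{Topk}$ is handled automatically: the contradiction argument would then yield $\tilde i \in \textsc{Topk}$ with $\phi_{\tilde i} < \phi_k$, which is impossible, so $High$ is trivially $\epsilon$-approximate there.
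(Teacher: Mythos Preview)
Your argument is correct and is precisely the standard LUCB-style proof that the paper defers to: the paper's own proof of Theorem~\ref{thm:improved_stopping_condition} simply cites Theorem~1 of~\cite{lucb} and remarks that Hoeffding is replaced by the CLT-based CI guarantee, which is exactly your first-stage union bound. Your second-stage contradiction via the pigeonhole displacement of some $j \in \textsc{Topk}$ into $Low$, followed by the chaining $\beta_\ell \ge \beta_j \ge \phi_j \ge \phi_k$ and $\alpha_h \le \alpha_{\tilde i} \le \phi_{\tilde i} < \phi_k - \epsilon$, is the core of that cited argument, so you have essentially reconstructed what the paper invokes by reference.
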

\begin{proof}
This proof is identical to Theorem 1 from \cite{lucb} with one minor difference. The authors in \cite{lucb} use Hoeffding's inequality prior to taking a union bound to show that the failure probability is at most $\delta$. Here, we do not need the application of Hoeffding's inequality, since we alreay have the CLT guarantees for the CIs.
\end{proof}
\begin{algorithm}
    \caption{Greedy Sampling Algorithm (with Overlap-Based Stopping Condition)}
    \label{alg:lucb}
    \begin{algorithmic}[1]
        \Procedure{Greedy}{$f$, $x$, $ShapEstAlg$, $T_{min}$, $\epsilon$, $\delta$}
            \State// Collect a minimum number of shap estimates for all features
            \For{$i \gets 1$ to $d$}
                \For{$j \gets 1$ to $T_{min}$}
                    \State$\hat{\phi}^j_i = ShapEstAlg(f,x, i)$
                \EndFor
                $S_i = \{\hat{\phi}^j_i\}$
                \State$\hat{\phi}_i \gets \sum_j \hat{\phi}^j_i / T_{min}$, $\alpha_i \gets \hat{\phi}_i - Z(\delta / d)\frac{\sigma_i}{\sqrt{T_{min}}}$ and $\beta_i \gets \hat{\phi}_i + Z(\delta / d)\frac{\sigma_i}{\sqrt{T_{min}}}$
            \EndFor
            \While{ True}
                \State// Identify $h,\ell$
                \State$High, Low \gets \argmax_k(\{\hat{\phi_i}\}), \argmin_{d-k}(\{\hat{\phi_i}\})$
                \State$h, \ell \gets \argmin\limits_{i\in High}( \alpha_i), \argmax\limits_{i\in Low}(\beta_i)$ 
                \State// collect additional SHAP estimates for $h,\ell$
                \State$S_h \gets S_h\cup ShapEstAlg(f,x,h)$ 
                \State$S_{\ell} \gets S_{\ell}\cup ShapEstAlg(f,x,\ell)$ 
                \State// Update mean and CI widths of $h,\ell$
                \State For $i \in \{\ell, h\}$: $\hat{\phi}_i \gets \sum_j \hat{\phi}^j_i / |S_i|$, $\alpha_i \gets \hat{\phi}_i - Z(\delta / d)\frac{\sigma_i}{\sqrt{|S_i|}}$ and $\beta_i \gets \hat{\phi}_i + Z(\delta / d)\frac{\sigma_i}{\sqrt{|S_i|}}$ 
                \State Break if $\beta_{\ell} - \alpha_h \leq \epsilon$
            \EndWhile
            \State \textbf{return} $High$
        \EndProcedure
    \end{algorithmic}
\end{algorithm}
\vspace{-0.2in}
\subsection{Greedy sampling scheme (C2)}
The default variance-based sampling scheme used by Sampling SHAP minimizes the CIs for all features. Such sampling schemes are inefficent for the stopping condition in Theorem~\ref{thm:improved_stopping_condition}, which only depends on two features ($h$ and $\ell$) at any given point in the sampling process. To improve the sample efficiency, we consider a greedy sampling strategy~\cite{lucb} as described in Algorithm~\ref{alg:lucb}. The algorithm starts by using any feature-wise SHAP estimation algorithm (e.g., SamplingSHAP) to find an initial set of SHAP estimates $\{\hat{\phi}_i^j\}$ for each input feature $i$; \emph{a feature-wise SHAP estimator computes the SHAP values independently for each feature}. The mean SHAP estimates are used to categorize the features into the two groups $High, Low$. Then, the algorithm identifies $h$ and $\ell$ as defined in Threorem~\ref{thm:improved_stopping_condition}, and evaluates additional SHAP estimates for these two features. These steps are repeated until the stopping condition is met. At this point, $High$ will be a valid $(\epsilon, \delta)$-PAC solution for $TkIP$. This scheme improves sample efficiency by allocating more samples to $(h,\ell)$, which are exactly the features that can potentially affect what is inside \textsc{Topk}. To see why this algorithm terminates, notice that in each iteration exactly 2 CIs shrink. Therefore, in the worst case, there will come a point where all CIs will be of length at most $\epsilon$, and thus the stopping condition will trivially be true.

\subsection{KernelSHAP@k and SamplingSHAP@k}
We apply the above changes to existing algorithms to propose KernelSHAP@k (KernelSHAP + C1) and SamplingSHAP@k (SamplingSHAP + C1 + C2). In both cases, we incrementally add SHAP estimates $\hat{\phi}^j_i$ until the stopping condition (C1) is met and the \textsc{Topk} features are identified. Additionally, for SamplingSHAP@k, we use the more efficient greedy sampling scheme (C2) that allocates samples only to features that influence the stopping condition. Note that the greedy sampling scheme (C2) requires the ability to compute the SHAP values of features individually. Thus, we cannot apply C2 to KernelSHAP as it estimates the SHAP values of all features together. In contrast, SamplingSHAP estimates SHAP values per-feature, which makes it compatible with C2.

\begin{table}[b]
\vspace{-0.1in}
\caption{Description of the datasets used in our experiments.}
\centering
\scalebox{0.8}{
\begin{tabular}{l|l|l|l|l}
\toprule
\textbf{Datasets} & \textbf{\# Feats}& \textbf{\# Train} & \textbf{\#Test (Neg. pred)} & \textbf{Binary Classification Task} \\
\midrule
German Credit Risk~\cite{german}      & 61  & 800   &  56  & Loan repayment                       \\
Give Me Some Credit~\cite{gmsc}     & 11  & 96215 &  100 & Default in next 2 years                     \\
HELOC~\cite{heloc}                   & 23  & 8367  &  100 & Missing loan payments                       \\
UCI Credit Approval~\cite{uci}     & 15  & 522   &  66  & Credit card aproval/rejection                       \\
\bottomrule
\end{tabular}
}
\label{table:datasets}
\end{table} 
\section{Experiments} \label{sec:experiments}
To quantify the improvements in sample efficiency of our proposed methods, we compare the sample cost (i.e. number of function evaluations) of Kernel/SamplingSHAP@k with that of Kernel/SamplingSHAP (with naive stopping condition) using various credit-realted datasets. We present the experimental setup, followed by the results comparing sample costs and sensitivity studies that quantify how these costs change with the accuracy parameter $\epsilon$.
\subsection{Experimental setup}
Table\ref{table:datasets} lists the datasets used in our experiments, along with a brief description of the prediction task, number of features, and train/test split. In each case, we train a 5-layer MLP model on the binary classification task using the training set for 100 epochs, and use this model to make predictions on the test set. For the negatively classified examples in the test set (indicating a high likelihood of the credit application being rejected), we use different methods to compute the $Top\text{-}4$ features that contributed the most to the negative prediction in terms of their SHAP values\footnote{Our methodology of only evaluating explanations for negatively outcomes is motivated by regulations that require explainations to be provided in case of adverse actions (e.g., credit application being rejected).}. We use interventional SHAP for our experiments and use a positively classified example from the training set as our baseline. We compare the sample-efficiency of various methods in terms of the number of function ($f$) evaluations and runtime required to identify the $Top\text{-}4$ features with PAC guarantees\footnote{Runtime measured on a machine with 32-core AMD CPU and 128GB of memory. Code to reproduce results is included in the supplementary material.}.
\begin{table}[tb]
\caption{Comparing the sample cost and runtime required for finding $(\epsilon=0.005, \delta=10^{-6})$-PAC solution for TkIP ($k=4$) using different methods. Kernel/SamplingSHAP@k are faster and require significantly fewer function evaluations compared to Kernel/SamplingSHAP.}
\centering
\scalebox{0.8}{
\begin{tabular}{lllll}
\toprule
\multicolumn{1}{c}{\multirow{2}{*}{\textbf{Methods}}} & \textbf{German Credit} & \textbf{Give Me Some Credit} & \textbf{HELOC} & \textbf{UCI Credit} \\
\cmidrule{2-5}
\multicolumn{1}{c}{}                                  & \multicolumn{4}{c}{\textbf{Sample Cost (Num. of $f()$ evals.)}} \\
\cmidrule{2-5}
KernelSHAP & 221114 & 6883 & 59230 & 34336  \\
KernelSHAP@k & \textbf{27632}$(\downarrow8.0\times)$ & 5593$(\downarrow1.2\times)$ & 12986$(\downarrow4.6\times)$ & \textbf{7275}$(\downarrow4.7\times)$ \\
\cmidrule{2-5}
SamplingSHAP & 397607 & 10501 & 95641 & 53460 \\
SamplingSHAP@k & 27982$(\downarrow14.2\times)$ & \textbf{3184}$(\downarrow3.3\times)$ & \textbf{9221}$(\downarrow10.4\times)$ & 14485$(\downarrow3.7\times)$ \\
\midrule
\multicolumn{1}{c}{}                                  & \multicolumn{4}{c}{\textbf{Runtime (Seconds)}} \\
\cmidrule{2-5}
KernelSHAP & 15.95 & 0.04 & 0.5 & 3.34 \\
KernelSHAP@k & 2.28$(\downarrow7.0\times)$ & 0.03$(\downarrow1.2\times)$ & 0.11$(\downarrow4.6\times)$ & 0.73$(\downarrow4.6\times)$ \\
\cmidrule{2-5}
SamplingSHAP & 2.2 & 0.05 & 0.5 & 0.38 \\
SamplingSHAP@k & \textbf{0.15}$(\downarrow14.7\times)$ & \textbf{0.02}$(\downarrow3.3\times)$ & \textbf{0.05}$(\downarrow10.6\times)$ & \textbf{0.07}$(\downarrow5.1\times)$ \\
\bottomrule
\end{tabular}
}
\label{table:results}
\end{table}  

\subsection{Results}
Table\ref{table:results} compares the average sample cost (i.e. number of function evaluations) and average runtime required by different methods to identify $Top\text{-}4$ features with a $(\epsilon=0.005, \delta=10^{-6})$-PAC guarantee across different datasets. Our evaluations show that Kernel/SamplingSHAP@k significantly outperform their baseline counterparts Kernel/SamplingSHAP, offering between $1.2\times-14.2\times$ improvement in sample efficiency and between $1.2\times -14.7\times$ improvement in runtime. Between SamplingSHAP@k and KernelSHAP@k, we find that the method with the better sample-cost depends on the dataset in question. However, SamplingSHAP@k has a consistently lower runtime compared to kernelSHAP@k, even in cases when it has a higher sample cost. For instance, for the UCI credit dataset, we find that SamplingSHAP@k has roughly twice the sample cost compared to KernelSHAP@k, but it is $10\times$ faster in terms of runtime. The reason for this is that each KernelSHAP estimate is more expensive to compute as it requires solving a weighted regression problem using the outputs of the model. In contrast, SamplingSHAP works by just computing a simple average on the outputs of the model, which requires much less compute, resulting in a faster runtime.

\subsection{Sensitivity studies}
To understand how the accuracy parameter $\epsilon$ influences the sample-efficiency of various methods, we perform sensitivity studies by varying $\epsilon$ between $[0.005, 0.01]$. For different values of $\epsilon$, we plot the sample-cost (i.e. number of function evaluations) and runtime of different methods across the four datasets considered in our experiments. Note that a lower value of $\epsilon$ implies a lower margin of error in identifying the $Top-4$ features and requires estimating SHAP values with greater precision (narrower CIs). As $\epsilon$ is reduced from $0.01$ to $0.005$, we find that the sample-costs and runtimes of all methods increase. Notably, the rate of this increase is much higher for Sampling/KernelSHAP, compared to Sampling/KernelSHAP@k. This is because the naive stopping condition used by Sampling/KernelSHAP requires the CI widths of the SHAP estimates of all features to be lower than $\epsilon$, which drives up the samples required. In contrast, the stopping condition used by Sampling/KernelSHAP@k, allows for the CI widths of the features that don't influence the stopping condition to be much higher than $\epsilon$ and thus requires fewer samples.
\begin{figure*}[t]
  \centering\label{fig:sensitivity}
  \includegraphics[width=\textwidth]{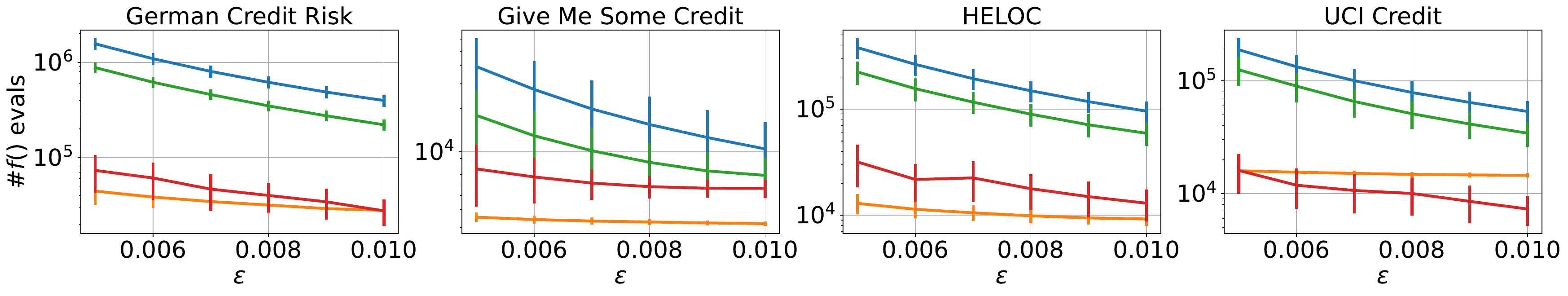}
  \includegraphics[width=\textwidth]{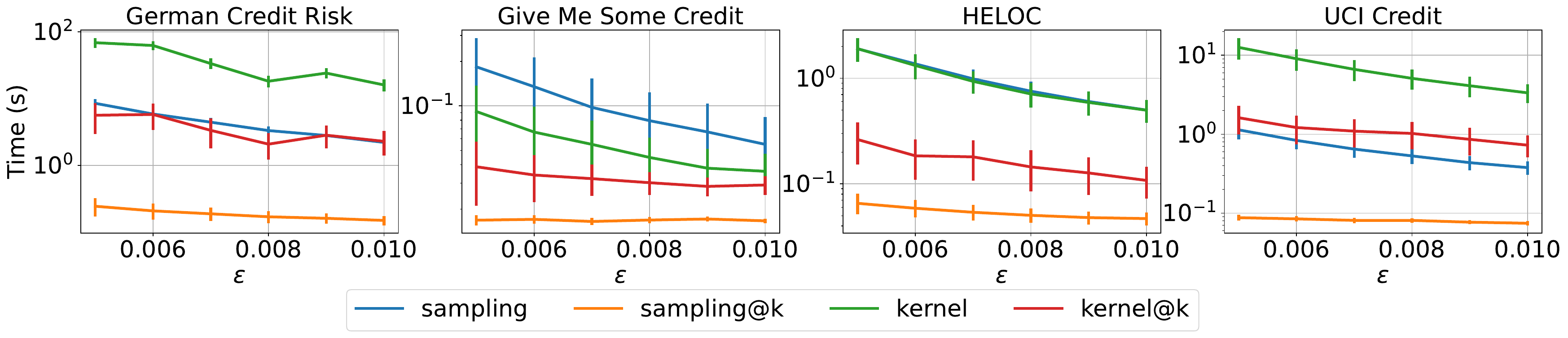}
  \caption{Sensitivity studies to understand sample-costs and runtimes of different methods for different values of the accuracy parameter $\epsilon$. While a lower value of $\epsilon$ results in an increase in sample-cost and runtime for all methods, this increase is much more significant for Kernel/SamplingSHAP compared to Kernel/SamplingSHAP@k. }
\end{figure*}
\section{Limitations and future work}\label{sec:limitations}
We discuss the limitations of our work and future directions of research in this section.

\textbf{Feature dependence:}  Since our work builds on the SHAP framework, it shares the limitations of SHAP. Importantly, SHAP assumes that the features of the input are not correlated. This assumption is typically not true in most practical settings. To address this issue, methods like GroupSHAP~\cite{lin2022model} have been developed, which groups features that are highly correlated and assigns attributions to groups of features instead of individual features. We leave the evaluation of our methods to the GroupSHAP setting as part of future work.

\textbf{Ordering of Top-k features:} Our proposed methods only solves the problem of identifying the Topk features. The features returned by our methods may not be in the right order. Thus, our methods may not be well suited for applications where the order of reporting the top-k features is important. One way in which our methods can be modified to such setting is by the repeated application of Kernel/SamplingSHAP@k by setting different values of $k$ ranging from $1,2,..k$. This would result in Topk features being identified in the right rank order. We leave the evaluation of this method as part of future studies.
\section{Conclusion}\label{sec:conclusion}
This paper studies the \emph{Top-k Identification problem (TkIP)}-- a novel problem setting, where the goal is to identify the $k$ features with the highest SHAP values. TkIP is motivated by applications in finance, where explanations for adverse actions are typically provided by listing the top-k features that led to a negative outcome. We find that while existing black-box techniques like KernelSHAP and SamplingSHAP can be trivially adapted to solve TkIP, doing so is highly sample inefficient. To address this issue, we develop sample efficient variants of these methods that are designed specifically for solving TkIP. Our key insight is that TkIP can be viewed as an \emph{Explore-m} problem -- a well-studied problem related to multi-armed bandits (MAB). This connection allows us to improve sample efficiency by using (1) an overlap-based stopping-condition and (2) a greedy sampling scheme that efficiently allocates samples between different features. We leverage these techniques to develop Kernel/SamplingSHAP@k, which can efficiently identify the Topk features with ($\epsilon, \delta$)-PAC guarantees . Our experiments on several credit-related datasets show that Kernel/SamplingSHAP@k significantly outperform their corresponding baselines: Kernel/SamplingSHAP , offering an average improvement of $5\times$ in sample-efficiency and runtime. We also characterize the sample-costs and runtime of our proposed methods across different levels of accuracy ($\epsilon$). Our paper provides efficient solutions to a previously unstudied problem that has important practical applications in finance.
\section{Acknowledgements}
This paper was prepared for informational purposes by the Artificial Intelligence Research group of JPMorgan Chase \& Co and its affiliates (“J.P. Morgan”) and is not a product of the Research Department of J.P. Morgan.  J.P. Morgan makes no representation and warranty whatsoever and disclaims all liability, for the completeness, accuracy or reliability of the information contained herein.  This document is not intended as investment research or investment advice, or a recommendation, offer or solicitation for the purchase or sale of any security, financial instrument, financial product or service, or to be used in any way for evaluating the merits of participating in any transaction, and shall not constitute a solicitation under any jurisdiction or to any person, if such solicitation under such jurisdiction or to such person would be unlawful. 

\printbibliography  
\end{document}